\newtheorem{thm}{Theorem}
\title{Semi-Orthogonal Multilinear PCA with Relaxed Start}
\author{Qiquan Shi  \textnormal{and}  Haiping Lu\thanks{
This paper will appear in  Proceedings of the 24th International Joint Conference on Artificial Intelligence (IJCAI 2015). \newline 
Copyright \copyright \space 2015, Association for the Advancement of Artificial Intelligence (www.aaai.org). All rights reserved. 
 } \\
Department of Computer Science \\ 
Hong Kong Baptist University,
Hong Kong, China \\
csqqshi@comp.hkbu.edu.hk,
haiping@hkbu.edu.hk}
\begin{document}

\maketitle

\begin{abstract}
Principal component analysis (PCA) is an unsupervised method for learning low-dimensional features with orthogonal projections. Multilinear PCA methods extend PCA to deal with multidimensional data (tensors) directly via tensor-to-tensor projection or tensor-to-vector projection (TVP). However, under the TVP setting, it is difficult to develop an effective multilinear PCA method with the orthogonality constraint. This paper tackles this problem by  proposing a novel \emph{Semi-Orthogonal Multilinear PCA} (SO-MPCA) approach. SO-MPCA learns low-dimensional features directly from tensors via TVP by imposing the orthogonality constraint in  \emph{only one} mode. This formulation results in  more captured variance and more learned features than full orthogonality. For better generalization, we further introduce a  \emph{relaxed start} (RS) strategy to get \textit{SO-MPCA-RS} by fixing the starting projection vectors, which increases the bias and reduces the variance of the learning model. Experiments on both face (2D) and gait (3D) data demonstrate that SO-MPCA-RS outperforms other competing algorithms on the whole, and the relaxed start strategy is also effective for other TVP-based PCA methods.
\end{abstract}

\section{Introduction}
Principal component analysis (PCA) is a classical unsupervised dimensionality reduction method \cite{pcabook}. It transforms input data into a new feature space of lower dimension via \emph{orthogonal projections}, while keeping most variance of the original data. PCA is widely used in areas such as data compression  \cite{kusner2014stochastic}, computer vision \cite{ke2004pca}, and pattern recognition \cite{anaraki2014memory,deng2014transform}.

Many real-world data are multi-dimensional, in the form of \emph{tensors} rather than vectors \cite{KoldaReview09}. The number of dimensions of a tensor is the \emph{order} and each dimension is a \emph{mode} of it. For example,  gray images are second-order tensors (matrices)  and  video sequences are third-order tensors \cite{lu2013multilinear}. Tensor data are also common in applications such as data center monitoring, social network analysis, and network forensics \cite{KoldaICML07TensorTut}. However, PCA on multi-dimensional data requires reshaping tensors into vectors first. This \emph{vectorization} often leads to breaking of original data structures, more complex model with lots of parameters, and high computational and memory demands \cite{lu2013multilinear}. Many researchers address this problem via multilinear extensions of PCA to deal with tensors directly, and there are two main approaches.

One approach is based on \textit{Tensor-to-Tensor Projection} (TTP) that learns low-dimensional \emph{tensors} from high-dimensional \emph{tensors}. The two-dimensional PCA (2DPCA) \cite{2DPCApami04} is probably the first PCA extension to deal with images without vectorization. The generalized low rank approximation of matrices (GLRAM) \cite{GPCAML05} and the generalized PCA (GPCA) \cite{GPCAYe} further generalize 2DPCA from single-sided projections to two-sided projections via reconstruction error minimization and variance maximization, respectively.  Concurrent subspace analysis (CSA) \cite{ConcurrentCVPR2005} and multilinear PCA (MPCA) \cite{MPCATNN} extend GLRAM and GPCA to general higher-order tensors, respectively.

Another approach is based on \textit{Tensor-to-Vector Projection} (TVP) that learns low-dimensional \emph{vectors} from high-dimensional \emph{tensors} in a \emph{successive} way. The tensor rank-one decomposition (TROD) \cite{TRODCVPR2001} minimizes reconstruction error via (greedy) successive residue calculation. The uncorrelated multilinear PCA (UMPCA) \cite{lu2009uncorrelated} maximizes variance with the zero-correlation constraint, following the successive derivation of PCA. However, the number of features that can be extracted by UMPCA is upper-bounded by the lowest mode dimension. For example, for a tensor of size 300$\times $200$\times $3, UMPCA can only extract \emph{three} features, which have very limited usage.

Orthogonality constraint is popular in feature extraction \cite{hua2007face,kokiopoulou2007orthogonal,gao2013stable},  tensor decomposition \cite{kolda2001orthogonal}, and low-rank tensor approximation \cite{edelman1998geometry,wang2015orthogonal}. PCA also obtains orthogonal projections, and the TTP-based PCA methods produce orthogonal projection vectors in each mode. However, none of the existing TVP-based PCA methods derive orthogonal projections. Our study found that it is indeed ineffective to impose full orthogonality in all the modes for TVP-based PCA, due to \emph{low captured variance} and \emph{limited number of extracted features}.

In this paper, we present a new TVP-based multilinear PCA algorithm, \textit{Semi-Orthogonal Multilinear PCA} (SO-MPCA) with \textit{Relaxed Start}, or SO-MPCA-RS, to be detailed in Sec. 3.  There are two main contributions:
\begin{itemize}
\item We propose a novel SO-MPCA approach to maximize the captured variance via TVP with orthogonality constraint in only one mode, which is called \textit{semi-orthogonality} according to \cite{wang2015orthogonal}. The semi-orthogonality results in  \emph{more captured variance} and \emph{more learned features} than full-orthogonality. For the same tensor of size 300$\times$200$\times$3 discussed earlier, SO-MPCA can extract \emph{300} features while full-orthogonal multilinear PCA can only extract \emph{three} features (similar to UMPCA).

\item We introduce a \emph{Relaxed Start (RS)} strategy to get SO-MPCA-RS by fixing the starting projection vectors for better generalization \cite{abu2012learning}. This strategy constrains the hypothesis space to a smaller set, leading to increased bias and reduced variance of the learning model. The experimental results in Sec. 4 show that SO-MPCA-RS outperforms other competing PCA-based methods on the whole. In addition, this is a new strategy for tensor-based algorithms and we show its effectiveness for other TVP-based PCA methods.
\end{itemize}

In the following, we cover the necessary background first.

\section{Background}

\textbf{Notations and basic operations:} We follow the notations in \cite{MultilinearBestR00} to denote vectors by lowercase boldface letters, e.g., $\mathbf{x}$; matrices by uppercase boldface letters, e.g., $\mathbf{X}$; and tensors by calligraphic letters, e.g., $\mathcal{X}$. We denote their elements with indices in parentheses, and indices by lowercase letters spanning the range from 1 to the uppercase letter of the index, e.g., $n =1,\cdots,N$. An $N$th-order tensor $\mathcal{A}\in \mathbb{R}^{I_1\times\cdots\times I_N}$ is addressed by $N$ indices $\{i_n\}$. Each $i_n$ addresses the $n$-mode of $\mathcal{A}$. The $n$-mode product of an $N$th-order 	tensor $\mathcal{A}$ by a vector $\mathbf{u}\in\mathbb{R}^{I_n}$, denoted by $\mathcal{B}=\mathcal{A}\times_n\mathbf{u}^T$,	is a tensor with entries:
\begin{equation}
\mathcal{B}(i_1,\cdots,i_{n-1},1,i_{n+1},\cdots,i_N)=\sum_{i_n}\mathcal{A}(i_1,\cdots,i_N)\cdot\mathbf{u}(i_n).
\end{equation}

\textbf{Tensor-to-vector projection:}	\emph{Elementary multilinear projections }(EMPs) are the building blocks of a TVP. We denote an  EMP as  $\{\mathbf{u}^{(1)}, \mathbf{u}^{(2)},\cdots,\mathbf{u}^{(N)}\}$, consisting of one unit projection vector in each mode, i.e., $\parallel\mathbf{u}^{(n)}\parallel=1$ for $n=1,\cdots,N$, where $\parallel\cdot\parallel$ is the Euclidean norm for vectors. It projects a tensor $\mathcal{X}\in\mathbb{R}^{I_1\times I_2\times\cdots\times I_N}$  to a scalar $y$ through the $N$ unit projection vectors as \cite{lu2013multilinear}:
\begin{eqnarray}y=\mathcal{X}\times_1\mathbf{u}^{(1)^T}\times_2 \mathbf{u}^{(2)^T}\cdots\times_N\mathbf{u}^{(N)^T}.\end{eqnarray}

The TVP of a tensor $\mathcal{X}$ to a vector $\mathbf{y}\in \mathbb{R}^{P}$ consists of $P$ EMPs $\{\mathbf{u}^{(1)}_p,\cdots,\mathbf{u}^{(N)}_p\},p=1,\cdots,P$, which can be written concisely as $\{\mathbf{u}^{(n)}_p, n=1,\cdots,N\}_{p=1}^P$ or $\{\mathbf{u}^{(n)}_p\}_{p=1}^P$:
\begin{equation}\mathbf{y}=\mathcal{X}\times_{n=1}^N\{\mathbf{u}^{(n)}_p,n=1,\cdots,N\}_{p=1}^P,\end{equation}
where the $p$th component of $\mathbf{y}$ is obtained from the $p$th EMP as:
\begin{equation}y_p=\mathbf{y}(p)=\mathcal{X}\times_1 \mathbf{u}^{(1)^T}_p\cdots\times_N\mathbf{u}^{(N)^T}_p=\mathcal{X}\times_{n=1}^N\{\mathbf{u}^{(n)}_p \}.
\end{equation}

\section{SO-MPCA with Relaxed Start}
This section presents the proposed SO-MPCA-RS by first formulating the SO-MPCA problem, then deriving the solutions with a successive and conditional approach, and finally introducing the relaxed start strategy for better generalization.

\subsection{Formulation of Semi-Orthogonal MPCA}
We define the SO-MPCA problem with orthogonality constraint in only one mode, i.e., \emph{semi-orthogonality}~\cite{wang2015orthogonal}, as follows:

\textbf{The SO-MPCA problem:}	A set of $M$ tensor data samples \{$\mathcal{X}_1$, $\mathcal{X}_2, \cdots, \mathcal{X}_M$\} are available for training. Each sample $\mathcal{X}_m\in \mathbb{R}^{I_1\times I_2\times\cdots\times I_N}$ can be viewed a point in the tensor space $\mathbb{R}^{I_1}\bigotimes \mathbb{R}^{I_2}\cdots\bigotimes \mathbb{R}^{I_N}$, where $I_n$ is the $n$-mode dimension and $\bigotimes$ denotes the Kronecker product. SO-MPCA considers a TVP, which consists of $P$ EMPs  $\{\mathbf{u}_p^{(n)}\in \mathbb{R}^{I_n\times 1},n=1,\cdots,N\}_{p=1}^P$, that projects the input tensor space  $\mathbb{R}^{I_1}\bigotimes \mathbb{R}^{I_2}\cdots\bigotimes \mathbb{R}^{I_N}$ into a vector subspace $\mathbb{R}^{P}$, i.e.,
\begin{equation}\label{Y}
\mathbf{y}_m=\mathcal{X}_m\times_{n=1}^N\{\mathbf{u}^{(n)}_p,n=1,\cdots,N\}_{p=1}^P
\end{equation}
for $m=1,\cdots,M$. The objective is to find a TVP to maximize the variance of the projected samples in each projection direction, subject to the orthogonality constraint in \emph{only one} mode, denoted as the $\nu$-mode.  The variance is measured by the total scatter $S_{p}$ defined as:
\begin{equation} \label{Scatter}
S_p=\sum_{m=1}^M (y_{m_p}-\bar{y}_p)^2,
\end{equation}
where $y_{m_p}=\mathcal{X}_m\times_{n=1}^N\{\mathbf{u}^{(n)}_p \}$, and $\bar{y}_p=\frac{1}{M}\sum_my_{m_p}$.

In other words, the objective of SO-MPCA is to obtain the $P $ EMPs, with the $p$th EMP determined as:
\begin{eqnarray}
\{\mathbf{u}^{(n)}_p, n=1,\cdots,N\}& =& \arg\max\sum_{m=1}^M(y_{m_p}-\overline{y}_{p})^2, \hspace{0.28cm} \label{SOMPCA2}\\
\mathrm{s.t.}\:\:
\mathbf{u}^{{(n)^T}}_p\mathbf{u}^{{(n)}}_p=1\:&\mathrm{for}&\: n=1,\cdots,N \:\:\mathrm{and}\: \label{SOMPCA1}     \hspace{0.39cm}  \\
\mathbf{u}^{(\nu)^T}_p \mathbf{u}^{(\nu)}_q = 0 \:& \mathrm{for}&\: p> 1 \:\mathrm{and}\: q = 1,\cdots,p-1, \hspace{0.39cm}\label{SOMPCA11} \end{eqnarray}
where the orthogonality constraint (\ref{SOMPCA11}) is imposed only in the $\nu$-mode and there is no such constraint for the other modes ($n= 1,\cdots,N$, $n \neq\nu$). The normalization constraint (\ref{SOMPCA1}) is imposed for all modes.

\textbf{Bound on the number of features:}
Based on the proof of Corollary 1 in \cite{lu2009uncorrelated}, we can derive that the number of features $P$ that can be extracted by SO-MPCA is upper-bounded by the $\nu$-mode dimension $I_\nu$: $P  \leq I_\nu$. Since we can choose any $n$ as $\nu$, we have the upper bound of $P$ as $P\leq \max_n I_n$ (i.e., the highest mode dimension).

\textbf{Selection of mode $\nu$:} Although we are free to choose any mode $n$ as $\nu$ to impose the orthogonality constraint (\ref{SOMPCA11}), it is often good to have more features in practice. Thus, in this paper, we choose the mode with the highest dimension as $\nu$:
\begin{equation} 
\nu = \arg \max_n I_n, \label{determineNU}
\end{equation}
such that $P= \max_n I_n = I_{\nu}$. On the other hand, we can also obtain a total of $\sum_n I_n$ features by running SO-MPCA $N$ times with $\nu=1,\cdots,N$. In this paper, we only focus on SO-MPCA with $\nu$ determined by (\ref{determineNU}).

\textbf{Semi-orthogonality vs. full-orthogonality:} If we impose the orthogonality constraint (\ref{SOMPCA11}) in \emph{all} modes, we can get \textit{Full-Orthogonal Multilinear PCA} (FO-MPCA). However, our study found that FO-MPCA is not effective primarily due to two reasons:
\begin{itemize}
\item Due to the heavy constraints, the variance captured by FO-MPCA is quite low, even lower than UMPCA. In contrast, SO-MPCA can capture more variance than both FO-MPCA and UMPCA. This is illustrated in Fig. \ref{Facetest1} in Sec. 4.
\item Similar to UMPCA, the number of features that can be extracted by FO-MPCA is upper-bounded by the lowest mode dimension $\min_n I_n$, which can be quite limited. For instance, FO-MPCA can extract only \emph{three} features for a tensor of size 300$\times $200$\times $3 while SO-MPCA can extract \emph{300} features by choosing $\nu=1$ for the same tensor. This can be observed in Fig. \ref{Facetest1} as well.
\end{itemize}

\subsection { Successive Derivation of SO-MPCA }
To solve the SO-MPCA problem, we follow the successive derivation in \cite{pcabook,lu2009uncorrelated} to determine EMPs one by one in $P$ steps:	
\begin{description}
\item [\textbf{Step $1 \hspace{.15cm}(p = 1) $:}]
Determine the first EMP $\{\mathbf{u}^{(n)}_1, n = 1,\cdots,N\}$ by maximizing $S_{1}$ with the constraint (\ref{SOMPCA1}).
 \item [\textbf{Step $p \hspace{.15cm}  (p=2,\cdots,P)$:}] \hspace{.1cm} \\
   \hspace{.3cm}  Determine the $p$th EMP $\{\mathbf{u}^{(n)}_p,  n = 1,\cdots,N\}  $  by maximizing $S_{p} $ with the constraints (\ref{SOMPCA1}) and (\ref{SOMPCA11}).
\end{description}

\textbf{Conditional subproblem: } In order to obtain the $p$th EMP $\{\mathbf{u}^{(n)}_p,n=1,\cdots,N\}$, we need to determine $N$ vectors. We follow the approach of alternating least squares \cite{PARAFAC70}. Thus, we can only obtain \emph {locally optimal }solutions as in many other tensor-based methods. For the $p$th EMP, the parameters of the $n$-mode projection vector $\mathbf{u}^{(n)}_p$ are estimated one mode by one mode separately conditioned on the projection vectors in all the other modes. Assuming the $p$th projection vectors in all but $n$-mode are given, we project the input tensor samples in these $(N-1)$ modes to obtain the partial multilinear projections as in \cite{lu2013multilinear}:
\begin{eqnarray}\tilde{\mathbf{y}}_{m_p}^{(n)}=&&\mathcal{X}_m\times_1 \mathbf{u}^{(1)^T}_p\cdots\times_{n-1}
\mathbf{u}^{(n-1)^T}_p\nonumber\\&&\times_{n+1} \mathbf{u}^{(n+1)^T}_p\cdots\times_N\mathbf{u}^{(N)^T}_p , \label{partialY} \end{eqnarray}
where $\tilde{\mathbf{y}}_{m_p}^{(n)}\in \mathbb{R}^{I_{n}}$. This conditional subproblem then becomes to determine $\mathbf{u}^{(n)}_p$ that projects the vector samples $\{\tilde{\mathbf{y}}_{m_p}^{(n)}, m=1,\cdots,M\}$ onto a line to maximize the variance captured. Then the total scatter matrix $\tilde{\mathbf{S}}_{p}^{(n)}$ corresponding to  $\{\tilde{\mathbf{y}}_{m_p}^{(n)}, m=1,\cdots,M\}$ becomes:
\begin{eqnarray}\label{STPN}
\tilde{\mathbf{S}}_{p}^{(n)}&=&\sum_{m=1}^M
(\tilde{\mathbf{y}}_{m_p}^{(n)}-\bar{\tilde{\mathbf{y}}}_{p}^{(n)})(\tilde{\mathbf{y}}_{m_p}^{(n)}-\bar{\tilde{\mathbf{y}}}_{p}^{(n)})^T,\label{nmST1}
\end{eqnarray}
where $\bar{\tilde{\mathbf{y}}}^{(n)}_p=\frac{1}{M}\sum^M_{m=1}\tilde{\mathbf{y}}_{m_p}^{(n)}$.

For $p=1$ (step 1), the solution for $\mathbf{u}^{(n)}_1$, where $n=1,\cdots,N$, is obtained as the unit eigenvector of $\tilde{\mathbf{S}}_{1}^{(n)}$ associated with the largest eigenvalue.

For $p\geq 2$, we need to deal with the $\nu$-mode and other modes differently. For modes other than $\nu$, the solution for $\mathbf{u}^{(n)}_p$, where $n=1,\cdots,N$, $n\neq \nu$, is obtained as the unit eigenvector of $\tilde{\mathbf{S}}_{p}^{(n)}$ associated with the largest eigenvalue.

\textbf{Constrained optimization for $\nu$-mode and $p\geq 2$:} When $p\geq 2$, we need to determine $\mathbf{u}^{(\nu)}_p$ by solving the following constrained optimization problem:
\begin{eqnarray} \label{SOMPCA33}
\mathbf{u}^{(\nu)}_p
=\arg\max\mathbf{u}^{{(\nu)^T}}_p\tilde{\mathbf{S}}_{p}^{(\nu)}\mathbf{u}^{(\nu)}_p&&   \\
\mathrm{s.t. }\:
\mathbf{u}^{{(\nu)^T}}_p\mathbf{u}^{{(\nu)}}_p=1\:\mathrm{and}\:
\mathbf{u}^{(\nu)^T}_p \mathbf{u}^{(\nu)}_q = 0,&&\hspace{-.4cm} q =1,\cdots p-1. \nonumber	
\end{eqnarray}

We solve this problem by the following theorem:

\begin{thm}\label{thmSOMPCA}
The solution to the problem (\ref{SOMPCA33}) is the (unit-length) eigenvector corresponding to the largest eigenvalue of the following eigenvalue problem:
\begin{equation}\label{SOMPCAfinal}
 \begin{aligned}
 \boldsymbol\Gamma_p^{(\nu)} \hspace{.05cm} \tilde{\mathbf{S}}^{(\nu)}_{p} \hspace{.1cm} \mathbf{u}_{p}^{(\nu)}  \hspace{.1cm} =  \hspace{.1cm}  \lambda  \mathbf{u}_{p}^{(\nu)} ,
 \end{aligned}
\end{equation}
where,
\begin{eqnarray}\label{Gammap}
\boldsymbol\Gamma_p^{(\nu)} =  \hspace{.1cm} \hspace{.1cm} [ \hspace{.1cm} \mathbf{I}_{I_n}\hspace{.1cm} - \hspace{.1cm}   \hspace{.2cm}  \sum_{q=1}^{p-1}  \mathbf{u}_{q}^{(\nu)} {\mathbf{u}_{q}^{(\nu)} }^T  \hspace{.1cm}], &&
 \end{eqnarray}
and $\mathbf{I}_{I_n}$ is an identity matrix of size ${I_n} \times  {I_n}$.
\end{thm}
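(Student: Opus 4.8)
The plan is to attack the constrained problem (\ref{SOMPCA33}) with the method of Lagrange multipliers, exactly as in the classical successive derivation of PCA, and then recognize the stationarity condition as the eigenvalue equation (\ref{SOMPCAfinal}). First I would introduce a multiplier $\lambda$ for the normalization constraint $\mathbf{u}^{(\nu)^T}_p\mathbf{u}^{(\nu)}_p=1$ and multipliers $\mu_q$, $q=1,\dots,p-1$, for the orthogonality constraints $\mathbf{u}^{(\nu)^T}_p\mathbf{u}^{(\nu)}_q=0$, forming the Lagrangian
\[
L = \mathbf{u}^{(\nu)^T}_p\tilde{\mathbf{S}}^{(\nu)}_p\mathbf{u}^{(\nu)}_p - \lambda\left(\mathbf{u}^{(\nu)^T}_p\mathbf{u}^{(\nu)}_p - 1\right) - \sum_{q=1}^{p-1}\mu_q\,\mathbf{u}^{(\nu)^T}_p\mathbf{u}^{(\nu)}_q.
\]
Setting $\partial L/\partial\mathbf{u}^{(\nu)}_p=\mathbf{0}$ and using the symmetry of $\tilde{\mathbf{S}}^{(\nu)}_p$ yields the stationarity condition $2\tilde{\mathbf{S}}^{(\nu)}_p\mathbf{u}^{(\nu)}_p - 2\lambda\mathbf{u}^{(\nu)}_p - \sum_{q=1}^{p-1}\mu_q\mathbf{u}^{(\nu)}_q = \mathbf{0}$.

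The central step is to eliminate the unknown multipliers $\mu_q$. Here I would exploit the fact that the previously computed $\nu$-mode vectors $\{\mathbf{u}^{(\nu)}_q\}_{q=1}^{p-1}$ form an orthonormal set, being unit-norm by (\ref{SOMPCA1}) and mutually orthogonal by (\ref{SOMPCA11}). Left-multiplying the stationarity condition by $\mathbf{u}^{(\nu)^T}_j$ for a fixed $j\le p-1$, the term $\mathbf{u}^{(\nu)^T}_j\mathbf{u}^{(\nu)}_p$ vanishes by the orthogonality constraint and $\sum_q\mu_q\mathbf{u}^{(\nu)^T}_j\mathbf{u}^{(\nu)}_q$ collapses to $\mu_j$ by orthonormality, giving $\mu_j = 2\mathbf{u}^{(\nu)^T}_j\tilde{\mathbf{S}}^{(\nu)}_p\mathbf{u}^{(\nu)}_p$. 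Substituting these back and dividing by $2$, the sum reorganizes into $\sum_{q=1}^{p-1}\mathbf{u}^{(\nu)}_q\mathbf{u}^{(\nu)^T}_q\tilde{\mathbf{S}}^{(\nu)}_p\mathbf{u}^{(\nu)}_p$, so the condition becomes $\left(\mathbf{I}-\sum_q\mathbf{u}^{(\nu)}_q\mathbf{u}^{(\nu)^T}_q\right)\tilde{\mathbf{S}}^{(\nu)}_p\mathbf{u}^{(\nu)}_p=\lambda\mathbf{u}^{(\nu)}_p$, which is precisely (\ref{SOMPCAfinal}) with $\boldsymbol\Gamma_p^{(\nu)}$ as in (\ref{Gammap}).

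Two points then remain, and the second is the main obstacle. The matrix $\boldsymbol\Gamma_p^{(\nu)}\tilde{\mathbf{S}}^{(\nu)}_p$ is a product of symmetric matrices and hence generally non-symmetric, so I cannot appeal to the standard symmetric eigenproblem directly, and in particular I must verify that its dominant eigenvector really does satisfy the orthogonality constraints (\ref{SOMPCA11}). The clean way is to note that $\boldsymbol\Gamma_p^{(\nu)}$ is the orthogonal projector onto the complement of $\mathrm{span}\{\mathbf{u}^{(\nu)}_q\}_{q=1}^{p-1}$, so the range of $\boldsymbol\Gamma_p^{(\nu)}\tilde{\mathbf{S}}^{(\nu)}_p$ lies entirely in that complement; hence for any eigenpair with $\lambda\neq 0$ the vector $\mathbf{u}^{(\nu)}_p=\lambda^{-1}\boldsymbol\Gamma_p^{(\nu)}\tilde{\mathbf{S}}^{(\nu)}_p\mathbf{u}^{(\nu)}_p$ automatically lies in the complement and so obeys (\ref{SOMPCA11}). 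Given that the constraint holds, I would left-multiply (\ref{SOMPCAfinal}) by $\mathbf{u}^{(\nu)^T}_p$: using $\mathbf{u}^{(\nu)^T}_p\boldsymbol\Gamma_p^{(\nu)}=\mathbf{u}^{(\nu)^T}_p$ (the projector acts as the identity on a vector already in its range, by symmetry of $\boldsymbol\Gamma_p^{(\nu)}$) together with $\mathbf{u}^{(\nu)^T}_p\mathbf{u}^{(\nu)}_p=1$, this gives $\mathbf{u}^{(\nu)^T}_p\tilde{\mathbf{S}}^{(\nu)}_p\mathbf{u}^{(\nu)}_p=\lambda$, i.e. the objective equals the eigenvalue. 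Since $\tilde{\mathbf{S}}^{(\nu)}_p$ is positive semidefinite, its largest eigenvalue on the complement is nonnegative, so choosing the eigenvector of (\ref{SOMPCAfinal}) with the largest $\lambda$ both satisfies all constraints and maximizes the objective, which is exactly the claim.
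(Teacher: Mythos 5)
Your proposal is correct and follows essentially the same route as the paper: the same Lagrangian, the same elimination of the multipliers $\mu_q$ by premultiplying the stationarity condition with ${\mathbf{u}_{q}^{(\nu)}}^T$, and the same reorganization into the eigenvalue problem $\boldsymbol\Gamma_p^{(\nu)}\tilde{\mathbf{S}}^{(\nu)}_{p}\mathbf{u}_{p}^{(\nu)}=\lambda\mathbf{u}_{p}^{(\nu)}$. Your closing argument actually goes a step further than the paper's proof, which stops at ``$\lambda$ is the criterion to be maximized'': you verify that since $\boldsymbol\Gamma_p^{(\nu)}$ is the orthogonal projector onto $\mathrm{span}\{\mathbf{u}^{(\nu)}_q\}^{\perp}$, any eigenvector of the (non-symmetric) matrix $\boldsymbol\Gamma_p^{(\nu)}\tilde{\mathbf{S}}^{(\nu)}_{p}$ with $\lambda\neq 0$ automatically satisfies the orthogonality constraints and attains objective value $\lambda$, a point the paper leaves implicit.
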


\begin{proof} First, we use Lagrange multipliers to transform the problem (\ref{SOMPCA33}) to include all the constraints as:
\begin{eqnarray}\label{laglangri}
\mathcal{L}_{\nu}  &=& {\mathbf{u}_{p}^{(\nu)}}^T  \tilde{\mathbf{S}}^{(\nu)}_{p} \mathbf{u}_{p}^{(\nu)}  -\lambda ({\mathbf{u}_{p}^{(\nu)}}^T  \mathbf{u}_{p}^{(\nu)} -1 )
\nonumber\\&-& \sum_{q=1}^{p-1}\mu_q {\mathbf{u}_{p}^{(\nu)} }^T   \mathbf{u}_{q}^{(\nu)},
\end{eqnarray}
where $\lambda $ and $\{\mu_q,q=1,\cdots,p-1\}$ are Lagrange multipliers.

Then we set the partial derivative of $ \mathcal{L}_{\nu}$ with respect to $\mathbf{u}_{p}^{(\nu)}$ to zero:
\begin{eqnarray}
\centering
\frac{\partial \mathcal{L}_{\nu}  }{\partial\mathbf{u}_{p}^{(\nu)}  }   =  2  \tilde{\mathbf{S}}^{(\nu)}_{{p}} \mathbf{u}_{p}^{(\nu)} -  2 \lambda  \mathbf{u}_{p}^{(\nu)} -  \sum_{q=1}^{p-1} \mu_q  \mathbf{u}_{q}^{(\nu)} =  0 .   \label{OMPCA2}
\end{eqnarray}
Premultiplying (\ref{OMPCA2}) by $ {\mathbf{u}_{p}^{(\nu)}}^T$, the third term vanishes and we get
\begin{eqnarray}\label{Lambda}
      \begin{aligned}
     2 {\mathbf{u}_{p}^{(\nu)} }^T  \tilde{\mathbf{S}}^{(\nu)}_{p} \mathbf{u}_{p}^{(\nu)} -  2 \lambda  {\mathbf{u}_{p}^{(\nu)} }^T \mathbf{u}_{p}^{(\nu)}  = 0 &&\\
      \Rightarrow \hspace{.3cm} \lambda =  {\mathbf{u}_{p}^{(\nu)} }^T  \tilde{\mathbf{S}}^{(\nu)}_{p} \mathbf{u}_{p}^{(\nu)},  \hspace{2.1cm}
      \end{aligned}
    \end{eqnarray}
which indicates that $\lambda$ is exactly the criterion to be maximized, with the orthogonality constraint.

Next, a set of $(p - 1) $ equations are obtained  by  premultiplying (\ref{OMPCA2}) by ${\mathbf{u}_{q}^{(\nu)}}^T, q =1,\cdots,p-1$, respectively,
\begin{eqnarray}\label{OMPCA4}
 2 {\mathbf{u}_{q}^{(\nu)} }^T  \tilde{\mathbf{S}}^{(\nu)}_{p} \mathbf{u}_{p}^{(\nu)}   -   2 \lambda  {\mathbf{u}_{q}^{(\nu)} }^T \mathbf{u}_{p}^{(\nu)}  -   \sum_{s=1}^{p-1} \mu_s {\mathbf{u}_{q}^{(\nu)} }^T\mathbf{u}_{s}^{(\nu)}   = 0.   		
 \end{eqnarray}
The second term vanishes and the summand in the third term is non-zero only for $s=q$. Thus, we get
\begin{eqnarray}
 2 {\mathbf{u}_{q}^{(\nu)} }^T  \tilde{\mathbf{S}}^{(\nu)}_{p} \mathbf{u}_{p}^{(\nu)} -  \mu_q  = 0   \Rightarrow\: \mu_{q} =    {2{\mathbf{u}_{q}^{(\nu)} }^T  \tilde{\mathbf{S}}^{(\nu)}_{p}} \mathbf{u}_{p}^{(\nu)}.\label{OMPCA45}
\end{eqnarray}
Substituting (\ref{OMPCA45}) into (\ref{OMPCA2}), we get
  \begin{eqnarray}
   2  \tilde{\mathbf{S}}^{(\nu)}_{p} \mathbf{u}_{p}^{(\nu)}  -
   2 \lambda  \mathbf{u}_{p}^{(\nu)}  - \sum_{q=1}^{p-1} \mathbf{u}_{q}^{(\nu)}\cdot  {2{\mathbf{u}_{q}^{(\nu)} }^T  \tilde{\mathbf{S}}^{(\nu)}_{p} \mathbf{u}_{p}^{(\nu)}}    = 0   \nonumber  \\
     \Rightarrow \hspace{.3cm}  \lambda  \mathbf{u}_{p}^{(\nu)}  =   \tilde{\mathbf{S}}^{(\nu)}_{p} \mathbf{u}_{p}^{(\nu)} -    \sum_{q=1}^{p-1}  \mathbf{u}_{q}^{(\nu)}  {\mathbf{u}_{q}^{(\nu)} }^T  \tilde{\mathbf{S}}^{(\nu)}_{p} \mathbf{u}_{p}^{(\nu)}        \\
        \Rightarrow \hspace{.4cm}
     \lambda  \mathbf{u}_{p}^{(\nu)}    =   [ \hspace{.1cm} \mathbf{I}_{I_n} -   \sum_{q=1}^{p-1}  \mathbf{u}_{q}^{(\nu)} {\mathbf{u}_{q}^{(\nu)} }^T  \hspace{.1cm} ] \hspace{.2cm} \tilde{\mathbf{S}}^{(\nu)}_{p} \mathbf{u}_{p}^{(\nu)} .  \label{OMPCA5}
  \end{eqnarray}
Using the definition in (\ref{Gammap}),  (\ref{OMPCA5}) can be rewritten as:
\begin{equation}
   \begin{aligned}
     \boldsymbol\Gamma_p^{(\nu)} \hspace{.05cm} \tilde{\mathbf{S}}^{(\nu)}_{p} \hspace{.1cm} \mathbf{u}_{p}^{(\nu)}  \hspace{.1cm} =  \hspace{.1cm}  \lambda  \mathbf{u}_{p}^{(\nu)}.
     \end{aligned}    \label{SOMPCAsolution}
  \end{equation}
  Since  $\lambda$ is the criterion to be maximized, this maximization is achieved by setting  $\mathbf{u}_{p}^{(\nu)}$  to the (unit) eigenvector of  $ \boldsymbol\Gamma_p^{(\nu)} \hspace{.05cm} \tilde{\mathbf{S}}^{(\nu)}_{p} $ associated with  its corresponding largest eigenvalue . 	
\end{proof}

\begin{algorithm}[!t]
\small
\caption{Semi-Orthogonal Multilinear PCA with Relaxed Start (SO-MPCA-RS) }
\begin{algorithmic}[1]
\STATE {\bfseries Input:}
A set of tensor samples $ \{\mathcal{X}_m \in \mathbb{R}^{I_1\times  \cdots\times I_N},m=1,\cdots,M\}$, and the maximum number of iterations $K$.
\STATE Set $\nu = \arg \max_n I_n 	$.
\STATE Set the first EMP: $\mathbf{u}^{(n)}_{1}=\mathbf{1}/\parallel\mathbf{1}\parallel$ for $ n=1,\cdots,N$ .

\FOR{$p=2$ {\bfseries to} $P$}
\STATE Initialize  $\mathbf{u}^{(n)}_{p}=\mathbf{1}/\parallel\mathbf{1}\parallel$ for $ n=1,\cdots,N$.

\FOR{$k=1$ {\bfseries to} $K$}
\FOR{$n=1$ {\bfseries to} $N$}
  \STATE  Calculate the partial multilinear projection $\{\tilde{\mathbf{y}}_{m_p}^{(n)}\}$ for $m=1,\cdots,M$ according to (\ref{partialY}).
  \IF {$n==\nu$}
\STATE Calculate $ \boldsymbol\Gamma_p^{(\nu)}$ and $\tilde{\mathbf{S}}_{p}^{(\nu)}$ according to (\ref{Gammap}) and (\ref{STPN}), respectively. Then, set $\mathbf{u}^{(\nu)}_{p}$ to the eigenvector of
  $\boldsymbol\Gamma_p^{(\nu)} \tilde{\mathbf{S}}_{p}^{(\nu)}$ associated with
  the largest eigenvalue.
   \ELSE
  \STATE Calculate  $\tilde{\mathbf{S}}_{p}^{(n)}$ by (\ref{STPN}). Set $\mathbf{u}^{(n)}_{p}$ to the eigenvector of $ \tilde{\mathbf{S}}_{p}^{(n)}$ associated with  the largest eigenvalue.
\ENDIF

\ENDFOR
\ENDFOR
  \ENDFOR

\STATE \textbf{Output}  The TVP  $\{\mathbf{u}^{(n)}_p, n=1,\cdots,N \}_{p=1}^P$ .
\end{algorithmic}\label{SOMPCARS}
\end{algorithm}

\subsection{Relaxed Start for Better Generalization}
When we use SO-MPCA features for classification, we find the performance is limited. Therefore, we further introduce a simple \emph{relaxed start} (RS) strategy to get SO-MPCA-RS by fixing the first EMP $\{\mathbf{u}^{(n)}_1, n=1,\cdots,N\}$ (the starting projection vectors), without variance maximization. In this paper, we set this starting EMP $\mathbf{u}^{(n)}_1$ (for $n=1,\cdots,N)$ to the normalized uniform vector $\mathbf{1}/\parallel\mathbf{1}\parallel$ for simplicity.

This idea is motivated by the theoretical studies in Chapter 4 of \cite{abu2012learning} showing that constraining a learning model could lead to better generalization. By fixing the first EMP as simple vectors, the following EMPs have less freedom due to the imposed semi-orthogonality, which increases the bias and reduces the variance of the learning model. Thus, the SO-MPCA-RS model has a smaller hypothesis set than the SO-MPCA model. The two algorithms differ only in how to determine the first (starting) EMP though the following EMPs will all be different due to their dependency on the first EMP. 

This relaxed start strategy is not specific to SO-MPCA but generally applicable to any TVP-based subspace learning algorithm. We run controlled experiments in Sec. 4 to show that it can improve the performance of not only SO-MPCA but also TROD and UMPCA.

Algorithm 1 summarizes the SO-MPCA-RS algorithm.\footnote{Matlab code is available at: \url{http://www.comp.hkbu.edu.hk/~haiping/codedata.html}} The SO-MPCA algorithm can be obtained from Algorithm 1 by removing line 3, changing $p=2$ in line 4 to $p=1$ and setting $\boldsymbol\Gamma_1^{(\nu)}$ ($p=1$) in line 10 to an identity matrix.
\begin{table*}[!ttt]\caption{Face recognition rates in percentage (mean  $\pm$ std) by the nearest neighbor classifier  on the FERET subset. The top two results are highlighted with bold fonts and `-' indicates that no enough features can be extracted.}
\centerline{
\scriptsize
\centering
\setlength{\tabcolsep}{7.5pt}
\renewcommand{\arraystretch}{1.45}
\begin{tabular}{|c|c|ccccc|cc| c c|}
\hline $L$ & $P$   & PCA     &   CSA &   MPCA &  TROD &  UMPCA & SO-MPCA     & \textbf{SO-MPCA-RS }       & TROD-\textbf{RS }        & UMPCA-\textbf{RS} \\\hline%
&1	 &2.60$\pm$0.66	&3.87$\pm$1.02	&2.52$\pm$0.76	&2.70$\pm$0.44	&5.98$\pm$2.65	& 2.73 $\pm$0.69	&\textbf{6.85}$\pm$1.44	&2.63$\pm$0.82	&\textbf{6.04}$\pm$2.00		\\
&5	 &15.12$\pm$1.31	        &11.90$\pm$1.25	&16.65$\pm$1.82	&15.88$\pm$1.20	&23.23$\pm$4.49	&20.06$\pm$2.34	&\textbf{27.27}$\pm$2.36	&16.68$\pm$1.50	&\textbf{24.78}$\pm$4.76		\\
&10	 &22.69$\pm$2.21	        &21.35$\pm$2.76	&22.24$\pm$1.94	&21.52$\pm$2.83	&31.83$\pm$5.17	&28.77$\pm$2.72	&\textbf{36.34}$\pm$3.56	&21.86$\pm$3.03	&\textbf{33.16}$\pm$5.36		\\
$1$ &20	 &{27.62}$\pm$2.57	&26.16$\pm$2.38	&27.16$\pm$1.47	&26.30$\pm$2.49	&35.94$\pm$5.65	&31.94$\pm$2.95	&\textbf{40.32}$\pm$3.40	&26.51$\pm$1.97	&\textbf{36.65}$\pm$5.46		\\
&50	 &{31.38}$\pm$2.58	&31.37$\pm$1.92	&31.29$\pm$1.71	&29.63$\pm$2.21	&36.14$\pm$5.73	&32.33$\pm$2.78	&\textbf{40.48}$\pm$3.09	&29.80$\pm$1.48	&\textbf{37.05}$\pm$5.46		\\
&80	&--&31.95$\pm$1.84	&{32.17}$\pm$2.09	&31.14$\pm$2.42	 & --	&\textbf{32.26}$\pm$2.71	&\textbf{40.41}$\pm$3.09	&31.21$\pm$1.94		&	 --  \\\hline
&1	   &2.69$\pm$0.73	&3.36$\pm$0.54	&2.63$\pm$0.55	&2.65$\pm$0.77	&\textbf{7.28}$\pm$2.44	&2.69$\pm$0.46	&\textbf{7.97}$\pm$1.10	&2.81$\pm$0.79	&6.82$\pm$1.56		\\
&5  	&20.17$\pm$1.25	&15.15$\pm$1.03	&21.53$\pm$0.90	&21.34$\pm$1.56	&26.90$\pm$5.23	&24.34$\pm$1.59	&\textbf{33.82}$\pm$1.45	&21.62$\pm$1.93	&\textbf{29.19}$\pm$3.55		\\
&10	    &32.03$\pm$2.49	&29.45$\pm$1.95	&28.04$\pm$1.69	&30.05$\pm$1.70	&40.17$\pm$6.76	&36.94$\pm$2.58	&\textbf{46.63}$\pm$1.95	&30.83$\pm$1.93	&\textbf{44.01}$\pm$3.22		\\
$2$&20	&39.07$\pm$1.87	&37.07$\pm$2.27	&38.86$\pm$2.12	&36.87$\pm$2.37	&44.51$\pm$6.54	&41.70$\pm$2.48	&\textbf{52.19}$\pm$2.11	&37.64$\pm$2.19	&\textbf{48.67}$\pm$3.57		\\
&50	&43.86$\pm$2.53	&44.61$\pm$2.34	&44.54$\pm$2.74	&42.67$\pm$2.16	&45.47$\pm$6.65	&42.24$\pm$2.39	&\textbf{52.22}$\pm$1.73	&42.99$\pm$2.28	&\textbf{49.07}$\pm$3.61		\\
&80	&45.28$\pm$2.39	&45.82$\pm$2.76	&\textbf{46.02}$\pm$2.67	&44.46$\pm$2.53	&-- 	&42.20$\pm$2.39	&\textbf{52.19}$\pm$1.80	&44.78$\pm$2.48		&	--   \\\hline
&1	&2.72$\pm$0.45	&4.07$\pm$0.80	&2.25$\pm$0.44	&2.95$\pm$0.61	&\textbf{7.42}$\pm$1.17	&2.56$\pm$0.56	&\textbf{7.55}$\pm$1.17	&2.74$\pm$0.81	&7.34$\pm$1.08		\\
&5	&23.89$\pm$1.64	&16.58$\pm$0.95	&25.95$\pm$1.26	&24.48$\pm$1.79	&\textbf{33.86}$\pm$3.65	&28.30$\pm$2.05	&\textbf{36.93}$\pm$1.76	&26.14$\pm$2.17	&32.68$\pm$4.67		\\
&10	&37.20$\pm$1.91	&36.05$\pm$1.50	&34.91$\pm$2.40	&34.83$\pm$2.96	&49.39$\pm$2.83	&43.31$\pm$1.51	&\textbf{54.38}$\pm$3.09	&34.64$\pm$3.01	&\textbf{50.55}$\pm$4.81		\\
$3$&20	&46.05$\pm$2.11	&43.87$\pm$1.98	&45.48$\pm$2.46	&43.37$\pm$2.51	&55.83$\pm$3.32	&49.49$\pm$2.16	&\textbf{61.25}$\pm$2.85	&42.99$\pm$2.65	&\textbf{55.89}$\pm$4.16		\\
&50	&51.35$\pm$2.53	&51.60$\pm$2.50	&52.00$\pm$2.70	&48.92$\pm$2.69	&56.42$\pm$3.11	&49.80$\pm$2.29	&\textbf{61.08}$\pm$2.83	&49.47$\pm$2.76	&\textbf{56.38}$\pm$4.62		\\
&80	&52.66$\pm$2.67	&52.84$\pm$2.70	&\textbf{53.31}$\pm$2.59	&51.17$\pm$2.65	&-- 	&49.77$\pm$2.36	&\textbf{60.98}$\pm$2.83	&51.45$\pm$2.63		&-- 	  \\\hline
&1	&2.68$\pm$0.85	&3.92$\pm$1.04	&2.22$\pm$0.62	&3.11$\pm$0.83	&\textbf{7.96}$\pm$2.15	&3.13$\pm$0.90	&\textbf{8.34}$\pm$1.37	&3.08$\pm$0.66	&7.03$\pm$1.68		\\
&5	&25.26$\pm$1.77	&18.93$\pm$1.29	&28.71$\pm$1.91	&27.37$\pm$2.34	&37.66$\pm$4.88	&29.61$\pm$2.16	&\textbf{40.25}$\pm$1.52	&28.25$\pm$2.82	&\textbf{38.84}$\pm$2.97		\\
&10	&41.54$\pm$2.02	&40.39$\pm$2.36	&39.43$\pm$2.05	&38.82$\pm$3.91	&55.10$\pm$4.55	&47.10$\pm$2.88	&\textbf{59.30}$\pm$2.49	&39.25$\pm$2.95	&\textbf{57.30}$\pm$4.86		\\
$4$&20	&49.34$\pm$1.69	&49.39$\pm$2.35	&50.18$\pm$3.03	&47.57$\pm$2.70	&62.40$\pm$4.49	&53.85$\pm$2.89	&\textbf{66.60}$\pm$3.07	&47.73$\pm$2.96	&\textbf{64.08}$\pm$4.94		\\
&50	&56.85$\pm$2.09	&57.51$\pm$2.98	&57.48$\pm$2.72	&54.58$\pm$2.56	&63.13$\pm$4.15	&54.56$\pm$3.14	&\textbf{67.03}$\pm$2.86	&54.42$\pm$2.89	&\textbf{64.85}$\pm$5.01		\\
&80	&58.16$\pm$2.46	&\textbf{59.05}$\pm$2.65	&{58.91}$\pm$2.50	&57.30$\pm$2.46	&-- 	&54.47$\pm$3.17	&\textbf{66.89}$\pm$2.90	&57.48$\pm$2.46		&	 --  \\\hline
&1	&2.91$\pm$0.91	&4.37$\pm$1.06	&2.72$\pm$0.88	&2.59$\pm$0.64	&\textbf{7.41}$\pm$2.14	&3.07$\pm$0.60	&\textbf{8.38}$\pm$0.97	&2.96$\pm$0.75	&7.20$\pm$1.35		\\
&5	&28.95$\pm$2.07	&20.75$\pm$1.77	&32.99$\pm$2.47	&31.75$\pm$2.79	&40.78$\pm$5.82	&34.20$\pm$2.67	&\textbf{42.35}$\pm$3.04	&33.45$\pm$1.41	&\textbf{41.67}$\pm$1.95		\\
&10	&47.06$\pm$1.54	&45.77$\pm$2.17	&43.29$\pm$3.07	&43.80$\pm$3.51	&60.49$\pm$6.37	&53.45$\pm$2.75	&\textbf{63.23}$\pm$3.37	&44.69$\pm$2.71	&\textbf{62.88}$\pm$2.59		\\
$5$&20	&55.66$\pm$1.94	&56.01$\pm$2.19	&56.79$\pm$2.14	&54.47$\pm$1.66	&66.90$\pm$6.23	&61.40$\pm$2.43	&\textbf{69.97}$\pm$2.55	&54.64$\pm$1.96	&\textbf{70.19}$\pm$3.25		\\
&50	&63.91$\pm$1.71	&64.58$\pm$2.13	&64.37$\pm$2.27	&61.54$\pm$2.75	&67.71$\pm$6.31	&62.26$\pm$2.88	&\textbf{70.70}$\pm$2.47	&61.51$\pm$1.92	&\textbf{70.81}$\pm$3.08		\\
&80	&64.61$\pm$1.67	&65.58$\pm$1.92	&\textbf{65.85}$\pm$2.02	&64.02$\pm$2.40	&-- 	&62.18$\pm$2.83	&\textbf{70.70}$\pm$2.45	&64.02$\pm$2.03		&	 --  \\\hline
&1	&2.86$\pm$0.89	&3.89$\pm$0.67	&2.49$\pm$1.01	&2.86$\pm$1.01	&\textbf{9.07}$\pm$0.83	&2.56$\pm$0.74	&\textbf{8.97}$\pm$0.97	&2.56$\pm$0.99	&7.21$\pm$1.35		\\
&5	&30.30$\pm$2.17	&21.89$\pm$2.04	&33.42$\pm$2.52	&33.59$\pm$2.63	&42.52$\pm$4.99	&35.18$\pm$1.32	&\textbf{43.32}$\pm$1.82	&35.18$\pm$2.52	&\textbf{44.65}$\pm$4.14		\\
&10	&48.97$\pm$2.96	&49.14$\pm$2.57	&45.65$\pm$2.85	&45.88$\pm$2.97	&63.16$\pm$5.32	&56.15$\pm$2.36	&\textbf{65.75}$\pm$2.76	&47.24$\pm$2.55	&\textbf{66.51}$\pm$3.10		\\
$6$&20	&58.57$\pm$2.84	&58.97$\pm$2.60	&59.73$\pm$2.96	&57.11$\pm$3.22	&70.73$\pm$5.39	&64.72$\pm$3.11	&\textbf{74.39}$\pm$2.79	&57.81$\pm$2.18	&\textbf{74.52}$\pm$3.10		\\
&50	&66.88$\pm$2.31	&67.84$\pm$2.48	&67.84$\pm$2.63	&64.05$\pm$2.95	&72.09$\pm$5.18	&65.32$\pm$2.86	&\textbf{74.75}$\pm$2.60	&65.12$\pm$2.92	&\textbf{75.12}$\pm$2.79		\\
&80	&{68.31}$\pm$2.33	&69.44$\pm$2.49	&\textbf{69.70}$\pm$2.35	&66.78$\pm$2.94	&-- 	&65.08$\pm$2.73	&\textbf{74.72}$\pm$2.56	&68.01$\pm$2.90		&	 --  \\\hline
&1	&2.68$\pm$1.19	&4.55$\pm$1.07	&2.12$\pm$1.07	&2.81$\pm$0.82	&\textbf{11.39}$\pm$1.85	&2.38$\pm$1.33	&\textbf{10.91}$\pm$1.37	&1.99$\pm$1.19	&8.57$\pm$1.68		\\
&5	&29.52$\pm$1.38	&22.51$\pm$1.29	&34.72$\pm$2.86	&32.03$\pm$2.46	&44.98$\pm$5.32	&35.58$\pm$2.04	&\textbf{45.89}$\pm$2.34	&35.02$\pm$1.82	&\textbf{45.93}$\pm$1.93		\\
&10	&51.21$\pm$2.11	&49.39$\pm$3.18	&46.19$\pm$2.43	&46.10$\pm$2.60	&65.67$\pm$5.82	&56.84$\pm$2.07	&\textbf{67.53}$\pm$2.34	&48.44$\pm$3.20	&\textbf{66.93}$\pm$2.01		\\
$7$&20	&59.57$\pm$2.64	&60.91$\pm$2.75	&61.69$\pm$2.57	&57.58$\pm$2.75	&73.16$\pm$4.28	&65.37$\pm$2.24	&\textbf{74.89}$\pm$1.97	&58.31$\pm$2.62	&\textbf{75.80}$\pm$2.24		\\
&50	&68.10$\pm$2.21	&69.35$\pm$1.89	&69.26$\pm$2.22	&65.54$\pm$2.79	&74.11$\pm$4.52	&65.37$\pm$2.02	&\textbf{75.24}$\pm$2.12	&66.06$\pm$3.10	&\textbf{76.88}$\pm$1.72		\\
&80	&69.70$\pm$2.84	&{70.39}$\pm$1.76	&\textbf{70.65}$\pm$1.97	&67.97$\pm$2.45	&-- 	&65.37$\pm$2.02	&\textbf{75.19}$\pm$2.18	&68.14$\pm$2.96		& --	  \\\hline
\end{tabular}
}\label{FERETtable1}
\end{table*}

\section{Experiments}
This section evaluates the proposed methods on both second-order and third-order tensor data in terms of recognition rate, the number of extracted features, captured variance, and convergence. In addition, we also study the effectiveness of the relaxed start strategy on other TVP-based PCA algorithms.

\textbf{Data:}\footnote{Both face and gait data are downloaded from: \url{http://www.dsp.utoronto.ca/~haiping/MSL.html}} For \emph{second-order} tensors, we use the same subset of the FERET database \cite{FERET00} as in \cite{lu2009uncorrelated}, with 721 face images from 70 subjects. Each face image is normalized to $80\times 60$ graylevel pixels. For \emph{third-order tensors}, we use a subset of the USF HumanID ``Gait Challenge'' database \cite{sarkar2005humanid}. We use the same gallery set (731 samples from 71 subjects) and probe A (727 samples from 71 subjects) as in \cite{lu2009uncorrelated}, and we also test probe B (423 samples from 41 subjects) and probe C (420 samples from 41 subjects). Each gait sample is a (binary) silhouette sequence of size of 32$\times$22$\times$10.

\textbf{Experiment setup:} In face recognition experiments, we randomly select $L = 1, 2, 3, 4, 5, 6, 7$ samples from each subject as the training data and use the rest for testing. We repeat such random splits (repetitions) ten times and report the mean correct recognition rates. In gait recognition experiments, we follow the standard setting and use the gallery set as the training data and probes A, B, and C as the test data (so there is no random splits/repetitions), and report the rank 1 and rank 5 recognition rates \cite{sarkar2005humanid}.

\textbf{Algorithms and their settings:} We first evaluate SO-MPCA and SO-MPCA-RS against five existing PCA-based methods: PCA \cite{pcabook}, CSA \cite{ConcurrentCVPR2005}, MPCA \cite{MPCATNN}, TROD \cite{TRODCVPR2001}, and UMPCA \cite{lu2009uncorrelated}.\footnote{For second-order tensors, CSA and MPCA are equivalent to GLRAM \cite{GPCAML05} and GPCA \cite{GPCAYe}, respectively.} CSA and MPCA produce tensorial features so they need to be vectorized. MPCA uses the full projection. For TROD and UMPCA, we use the uniform initialization \cite{lu2009uncorrelated}. For SO-MPCA and SO-MPCA-RS, we set the selected mode $\nu=1$ for the maximum number of features. For iterative algorithms, we set the number of iterations to 20. All features are sorted according to the scatters (captured variance) in descending order for classification. We use the \emph{Nearest Neighbor Classifier} with the Euclidean distance measure to classify the top $P$ features.  We test up to $P=80$ features in face recognition and up to $P=32$ features in gait recognition. The performance of FO-MPCA is much worse than SO-MPCA so it is not included in the comparisons (except variance study) to save space.

\begin{table*}[!ttt]\caption{Rank 1 and rank 5 gait recognition rates in percentage (mean $\pm$ std ) by the nearest neighbor classifier on the USF subset. The top two results are highlighted with bold fonts and `-' indicates that no enough features can be extracted.}
\centerline{
\scriptsize
\centering
\setlength{\tabcolsep}{7.5pt}
\renewcommand{\arraystretch}{1.55}
\begin{tabular}{|c|c|c|ccccc|cc| c c|}
\hline $Rank$ & $Probe$ & $  P  $   & PCA     &   CSA &   MPCA &  TROD &  UMPCA & SO-MPCA    & \textbf{SO-MPCA-RS }    & TROD-\textbf{RS }        & UMPCA-\textbf{RS} \\\hline%
& &5	&30.99	&22.54	&32.39	&28.17	&39.44	&30.99	&\textbf{40.85}	&18.31	&\textbf{39.44}	\\
&&10	&52.11	&43.66	&49.30	&42.25	&57.75	&49.30	&\textbf{59.15}	&33.80	&\textbf{63.38}	\\
&$A$&20	&\textbf{67.61}	&57.75	&60.56	&53.52	&--	&54.93	&\textbf{67.61}	&53.52	&--	\\
&&32	&\textbf{71.83}	&59.15	&61.97	&60.56	&--	&54.93	&\textbf{69.01}	&64.79	&	--  \\\cline{2-12}								
&&5	&26.83	&17.07	&24.39	&19.51	&26.83	&29.27	&\textbf{41.46}	&19.51	&\textbf{39.02}	\\
& &10	&48.78	&39.02	&46.34	&39.02	&46.34	&53.66	&\textbf{63.41}	&36.59	&\textbf{51.22}	\\
$1$&$B$&20	&\textbf{65.85}	&53.66	&58.54	&53.66	&--	&58.54	&\textbf{65.85}	&43.90	&--	\\
&&32	&\textbf{68.29}	&60.98	&58.54	&65.85	&--	&60.98	&\textbf{68.29}	&63.41	&	--  \\ \cline{2-12}																
&&5	&12.20	&9.76	&\textbf{14.63}	&4.88	&\textbf{24.39}	&12.20	&\textbf{14.63}	&7.32	&\textbf{14.63}	\\
&&10	&\textbf{29.27}	&14.63	&19.51	&14.63	&\textbf{29.27}	&\textbf{29.27}	&\textbf{29.27}	&19.51	&21.95	\\
&$C$&20	&\textbf{34.15}	&31.71	&29.27	&21.95	&--	&31.71	&\textbf{34.15}	&24.39	&--	\\
&&32	&\textbf{46.34}	&34.15	&29.27	&31.71	&--	&31.71	&\textbf{39.02}	&\textbf{39.02}	&--	  \\\hline
&	&5	&57.75	&56.34	&66.20	&54.93	&73.24	&67.61	&\textbf{84.51}	&42.25	&\textbf{73.24}	\\
&	&10	&80.28	&74.65	&77.46	&77.46	&83.10	&77.46	&\textbf{88.73}	&59.15	&\textbf{85.92}	\\
& $A$ &	20	&\textbf{87.32}	&80.28	&81.69	&76.06	&--	& 80.28	&\textbf{92.96}	& 77.46	&--	\\
&	&32	&\textbf{87.32}	&81.69	&83.10	&78.87	&--	&80.28	&\textbf{92.96}	&81.69	&--	\\\cline{2-12}								
&	&5	&48.78	&48.78	&53.66	&48.78	&{58.54}	&\textbf{63.41}	&\textbf{68.29}	&53.66	&{58.54}	\\
&	&10	&{73.17}	&70.73	&{70.73}	&60.98	&65.85	&70.73	&\textbf{75.61}	&\textbf{75.61}	&68.29	\\
$5$ &$B$	&20	&\textbf{78.05}	&75.61	&\textbf{78.05}	&75.61	&--	&70.73	&\textbf{80.49}	&\textbf{78.05}	&--	\\
&	&32	&{78.05}	&73.17	&{78.05}	&\textbf{80.49}	&--	&70.73	&\textbf{80.49}	&\textbf{80.49}	&--	\\			\cline{2-12}						
&	&5	&\textbf{51.22}	&34.15	&41.46	&36.59	&41.46	&43.90	&\textbf{56.10}	&29.27	&36.59	\\
&	&10	&53.66	&46.34	&43.90	&48.78	&43.90	&48.78	&\textbf{70.73}	&43.90	&\textbf{56.10}	\\
&$C$&20	&\textbf{65.85}	&56.10	&60.98	&48.78	&--	&48.78	&\textbf{78.05}	&46.34	&--	\\
&	&32	&\textbf{65.85}	&60.98	&58.54	&56.10	&--	&48.78	&\textbf{78.05}	&56.10	&	--  \\\hline
\end{tabular}
}\label{FERETtable2}
\end{table*}

\textbf{Face recognition results:} Table \ref{FERETtable1} shows the face recognition results for $P= 1, 5, 10, 20, 50, 80$ and $L=1, 2, 3, 4, 5, 6, 7$, including both the mean and the standard deviation (std) over ten repetitions. We highlight the top two results in each row in bold fonts for easy comparison. Only SO-MPCA-RS consistently achieves the top 2 results in all cases. Compared with existing methods (PCA, CSA, MPCA, TROD and UMPCA), SO-MPCA-RS outperforms the best performing existing algorithm (UMPCA) by 3.79$\%$ on average.
 	
Furthermore, for larger $L = 5, 6, 7$, SO-MPCA-RS outperforms the other five methods at least by 2.26$\%$ on average. For smaller $L = 1, 2, 3$, SO-MPCA-RS achieves a greater improvement of at least 5.28$\%$ over existing methods, indicating that SO-MPCA-RS is more superior in dealing with the small sample size (overfitting) problem.

\textbf{Gait recognition results:} Similarly, the gait recognition results are reported in Table \ref{FERETtable2} with the top two results  highlighted. Again, only SO-MPCA-RS consistently achieves the top 2 results in all cases. In rank 1 rate, the best performing existing algorithm is PCA, which outperforms SO-MPCA-RS by 3.73$\%$ on average. While in rank 5 rate, SO-MPCA-RS outperforms the best performing existing algorithm (still PCA) by 6.76$\%$ on average.

\textbf{Number of features:} In the tables, we use `-' to indicate that there are not enough features. For PCA, there are at most 69 features for face data when $L=1$ since there are only 70 samples for training. UMPCA can only extract 60 or 10 features for face and gait data, respectively. In contrast, SO-MPCA and SO-MPCA-RS (with $\nu=1$) can learn 80 features for face data and 32 features for gait data. 

\textbf{Feature variance:} We illustrate the variance captured by PCA, UMPCA, FO-MPCA, SO-MPCA, and SO-MPCA-RS in Fig. \ref{Facetest1} for face data with $L=1$ (not all methods are shown for clarity). Figure \ref{VarianceST} shows the sorted variance. It is clear that semi-orthogonality captures more variance than full-orthogonality, as we discussed in Sec. 3.1. Moreover, both  SO-MPCA and SO-MPCA-RS can capture more variance than UMPCA, but less than PCA (and also CSA, MPCA, and TROD, which are not shown). Though capturing less variance, SO-MPCA-RS achieves better overall classification performance than other PCA-based methods, with results consistently in the top two in all experiments.
\begin{figure} [!ttt]
\centering\subfigure[Sorted variance ]{%
{\includegraphics[width=5.0cm]{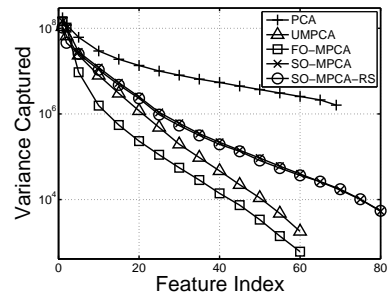} \label{VarianceST}}}
\centering\subfigure[Unsorted variance ]{%
{\includegraphics[width=5.0cm]{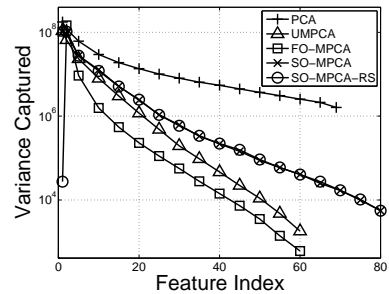} \label{VarianceOUTST}}}
\caption{The captured variance on face data with $L=1$.}
\label{Facetest1}%
\end{figure}

\begin{figure} [!ttt]
\centering\subfigure[$p=2$]{
{\includegraphics[width=4.1cm]{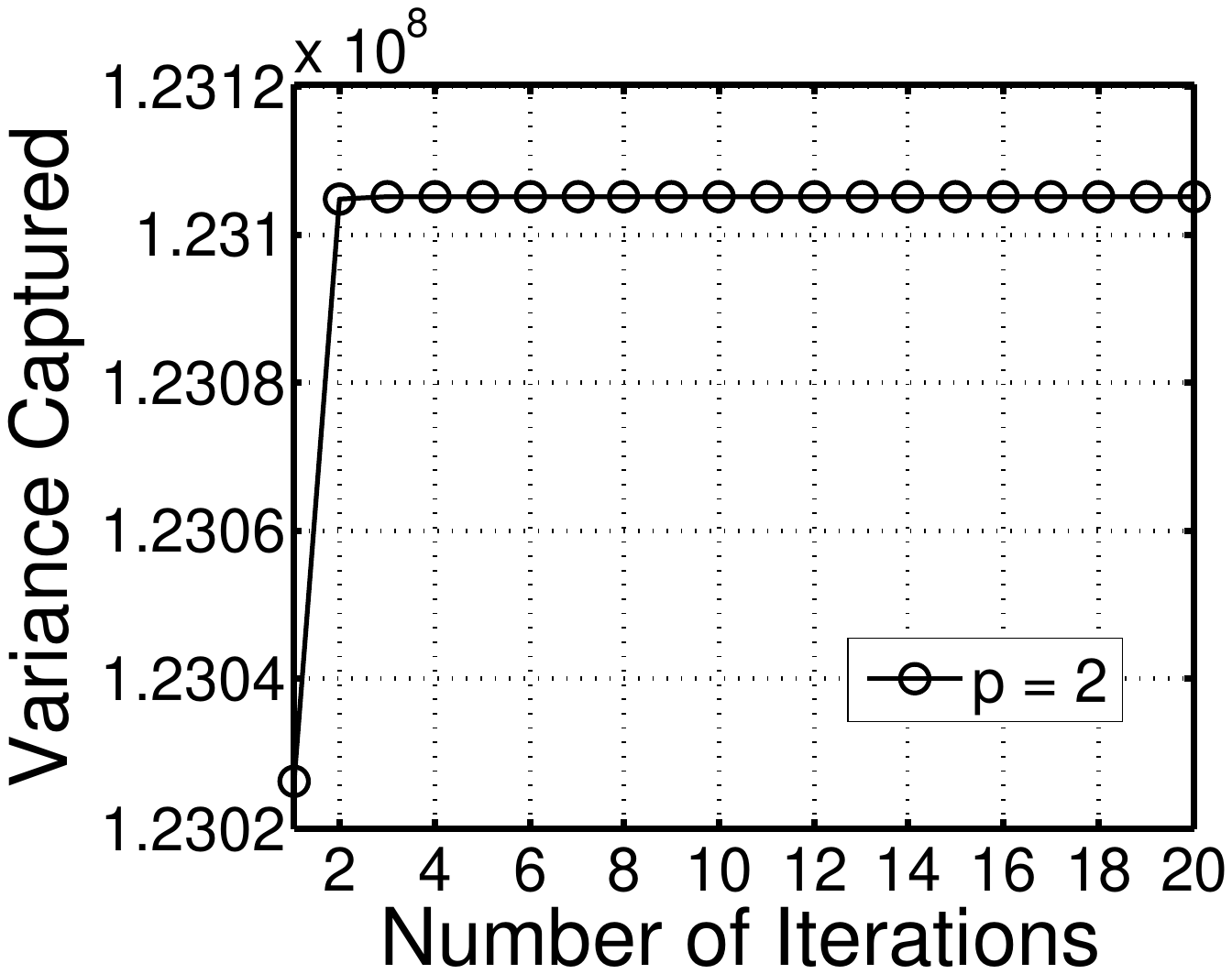}\label{convergence5}}}
\centering\subfigure[$p=5$]{
{\includegraphics[width=4.1cm]{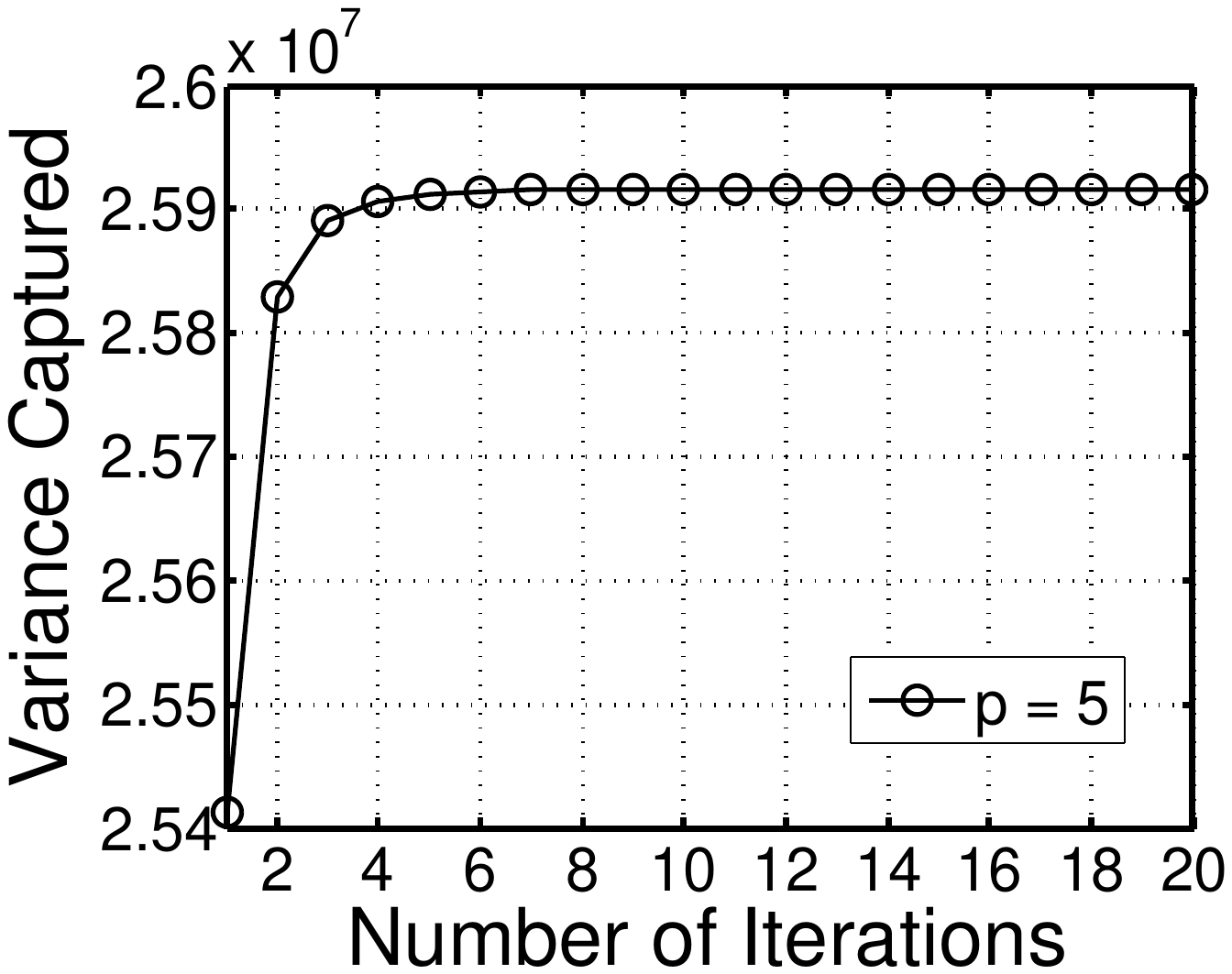}\label{convergence15}}}
\caption{Illustration of the SO-MPCA-RS algorithm's convergence performance on the face data with $L=1$.}
\label{convergface}
\end{figure}

We also show the unsorted captured variance in Fig. \ref{VarianceOUTST}. The variance captured by the first (fixed) EMP of SO-MPCA-RS is much less than other EMPs, which is not surprising since the variance is not maximized.

\textbf{Convergence:} We demonstrate the convergence of SO-MPCA-RS in Fig. \ref{convergface} for face data with $L=1$. We can see that SO-MPCA-RS converges in just a few iterations. SO-MPCA has a similar convergence rate.

\textbf{Effectiveness of relaxed start:} To evaluate the proposed relaxed start strategy, we apply it to two other  TVP-based methods, TROD and UMPCA, getting TROD-RS and UMPCA-RS, respectively. We summarize their performance in the last two columns of Tables \ref{FERETtable1} and \ref{FERETtable2} for face and gait recognition experiments, respectively.

Both tables show that relaxed start can help both TROD and UMPCA to achieve better recognition rates. From Table \ref{FERETtable1}, TROD-RS improves over TROD by 0.32$\%$ and UMPCA-RS improves over UMPCA by 2.15$\%$ on average. From Table \ref{FERETtable2}, TROD-RS achieves 3.03 $\%$ improvement over TROD and UMPCA-RS achieves 1.07$\%$ improvement over UMPCA for rank 1 rate. For rank 5 rate, TROD-RS improves 0.94$\%$ over TROD, and  UMPCA-RS improves 5.28$\%$ over UMPCA. The relaxed start is most effective for our SO-MPCA. SO-MPCA-RS has an improvement of 9.97$\%$ on face data over SO-MPCA. On gait data, SO-MPCA-RS outperforms SO-MPCA by 9.56$\%$ in rank 1 rate and 17.26$\%$ in rank 5 rate on average.

In addition, SO-MPCA-RS has better face recognition performance than TROD-RS and UMPCA-RS with an improvement of 8.08$\%$ and 1.64$\%$, respectively. On gait data,  SO-MPCA-RS improves the rank 1 recognition rate by 3.03$\%$ over  TROD-RS and 13.25$\%$ over UMPCA-RS  on average, and SO-MPCA-RS improves the rank 5 recognition rate by 11.07$\%$ and 13.73$\%$ over  TROD-RS  and  UMPCA-RS.

These controlled experiments  show the effectiveness of relaxed start on SO-MPCA and other TVP-based multilinear PCA methods (UMPCA and TROD). One possible explanation is that RS increases the bias and reduces the variance of the learning model, while further investigation is needed.

\section{Conclusion}
This paper proposes a novel multilinear PCA algorithm under the TVP setting, named as semi-orthogonal multilinear PCA with relaxed start (SO-MPCA-RS). The proposed SO-MPCA approach learns features directly from tensors via TVP to maximize the captured variance with the orthogonality constraint imposed in only one mode. This semi-orthogonality can capture more variance and learn more features than full-orthogonality. Furthermore, the introduced relaxed start strategy can achieve better generalization by fixing the starting projection vectors to uniform vectors to increase the bias and reduce the variance of the learning model. Experiments on face (2D data) and gait (3D data) recognition show that SO-MPCA-RS achieves the best overall performance compared with competing algorithms. In addition, relaxed start is also effective for other TVP-based PCA methods.

In this paper, we studied semi-orthogonality in only one mode. A possible future work is to learn SO-MPCA-RS features from each mode separately and then do a feature/score-level fusion.

\section*{Acknowledgments}
This research was supported by Research Grants Council of the Hong Kong Special Administrative Region (Grant 22200014 and the Hong Kong PhD Fellowship Scheme).

\bibliographystyle{named}
\bibliography{ijcai15SOMPCARS}
\end{document}